\begin{document}

\title{A Biconvex Method for Minimum-Time Motion Planning Through Sequences of Convex Sets}

\author{
\authorblockN{Tobia Marcucci, Mathew Halm, William Yang, Dongchan Lee, and Andrew D. Marchese}
\authorblockA{Amazon Robotics \\ \texttt{\{tobmar,mshalm,yangwilm,ldc,andymar\}@amazon.com}}
}

\newcommand{\reals}{\mathbb R}

\newcommand{\init}{_\mathrm{init}}
\newcommand{\term}{_\mathrm{term}}

\newcommand{\minimize}{\text{minimize}}
\newcommand{\subjectto}{\text{subject to}}
\newcommand{\conv}{\mathrm{conv}}
\newcommand{\interior}{\mathrm{int}}

\newcommand{\cA}{\mathcal A}
\newcommand{\cC}{\mathcal C}
\newcommand{\cE}{\mathcal E}
\newcommand{\cK}{\mathcal K}
\newcommand{\cQ}{\mathcal Q}
\newcommand{\cS}{\mathcal S}
\newcommand{\cT}{\mathcal T}
\newcommand{\cV}{\mathcal V}

\newcommand{\bzero}{\bm 0}
\newcommand{\bone}{\bm 1}
\newcommand{\bgamma}{\bm \gamma}
\newcommand{\bb}{\bm b}
\newcommand{\be}{\bm e}
\newcommand{\bp}{\bm p}
\newcommand{\bq}{\bm q}
\newcommand{\br}{\bm r}
\newcommand{\bv}{\bm v}
\newcommand{\bx}{\bm x}
\newcommand{\bA}{\bm A}

\newtheorem{lemma}{Lemma}
\newtheorem{proposition}{Proposition}
\theoremstyle{definition}
\newtheorem{assumption}{Assumption}
\newtheorem{property}{Property}

\newcommand{\thickvdots}{\raisebox{0em}{\scalebox{2}{$\vdots$}}}
\newcommand{\irange}[2]{\leq #2}
\newcommand{\krange}[2]{\leq #2}

\maketitle

\begin{abstract}
We consider the problem of designing a smooth trajectory that traverses a sequence of convex sets in minimum time, while satisfying given velocity and acceleration constraints.
This problem is naturally formulated as a nonconvex program.
To solve it, we propose a biconvex method that quickly produces an initial trajectory and iteratively refines it by solving two convex subproblems in alternation.
This method is guaranteed to converge, returns a feasible trajectory even if stopped early, and does not require the selection of any line-search or trust-region parameter.
Exhaustive experiments show that our method finds high-quality trajectories in a fraction of the time of state-of-the-art solvers for nonconvex optimization.
In addition, it achieves runtimes comparable to industry-standard waypoint-based motion planners, while consistently designing lower-duration trajectories than existing optimization-based planners.
\end{abstract}

\IEEEpeerreviewmaketitle

\section{Introduction}
\label{sec:intro}

Selecting the most effective motion-planning algorithm for a robotic system often requires balancing three competing objectives: reliability, computational efficiency, and trajectory quality.
Consider Sparrow, the robot arm in Fig.~\ref{fig:robot} that sorts individual products into bins before they get packaged in the Amazon warehouses.
The algorithms that move Sparrow must be extremely reliable, as these robots handle millions of diverse products every day, and each failure requires expensive interventions.
They must be efficient, since every millisecond spent planning is taken away from other crucial computations, and limits the robot reactivity to sensor observations.
Finally, they should generate trajectories that push the robot to its physical limits, so that the work-cell throughput is maximized and the hardware is fully utilized.
Unfortunately, general-purpose methods for motion planning do not excel in all of these areas at once.

Sampling-based methods like PRM~\cite{kavraki1996probabilistic}, RRT~\cite{lavalle1998rapidly}, and their asymptotically optimal versions~\cite{karaman2011sampling} can be fast enough for real-time applications.
They are highly parallelizable~\cite{thomason2024motions} and can run on a GPU~\cite{bialkowski2011massively,pan2012gpu}.
They are also reliable in low-dimensional spaces, where dense sampling is computationally feasible. 
However, they become significantly less effective as the space dimension grows.
Additionally, although their kinodynamic variants support differential constraints~\cite{lavalle2001randomized,karaman2010optimal,li2016asymptotically}, sampling-based methods remain considerably less practical for designing smooth continuous trajectories than producing polygonal paths.

Trajectory-optimization methods based on nonconvex programming~\cite{betts2010practical,schulman2014motion} scale well to high-dimensional spaces and explicitly factor in the robot kinematics and dynamics.
Over the years, these techniques have become significantly faster~\cite{toussaint2014newton,howell2019altro} and, with the advent of specialized GPU implementations~\cite{sundaralingam2023curobo}, they are now even viable for real-time motion planning.
Despite these advances, the main limitation of trajectory optimization remains its reliance on local solvers, which require extensive parameter tuning, handcrafted warm starts, may suffer from inconsistent runtimes, and can even fail to find a solution.
While various strategies have been proposed to address these issues~\cite{tedrake2010lqr,kalakrishnan2011stomp,zucker2013chomp,ichnowski2020gomp}, trajectory optimization remains often too brittle for industrial deployment.

\begin{figure}[t]
\centering
\includegraphics[width=\columnwidth]{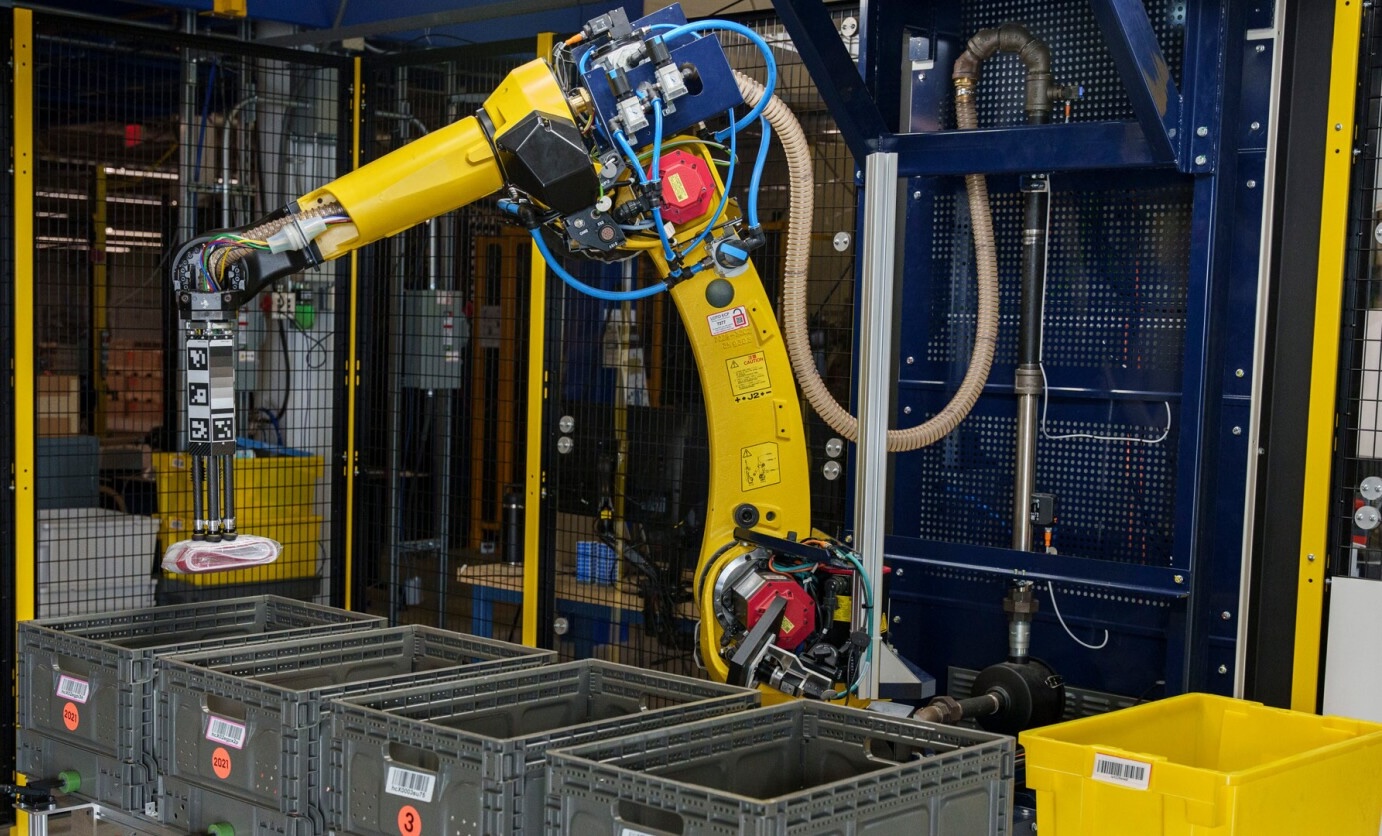}
\caption{Sparrow robot sorting products into bins in the Amazon warehouses.}
\label{fig:robot}
\end{figure}

Recently, a new family of motion planners that combine sampling-based and trajectory-optimization methods has stemmed from~\cite{deits2015efficient}.
First, the collision-free space is decomposed into safe convex sets.
This decomposition can be computed using region-inflation algorithms~\cite{deits2015computing,dai2024certified,werner2024faster,wang2024fast,wu2024optimal,werner2025superfast} and tailored sampling strategies~\cite{werner2024approximating}.
For UAVs, also safe flight corridors are widely used~\cite{chen2016online,liu2017planning,wu2024optimal}.
Then, the continuous trajectory is optimized in conjunction with the discrete sequence of sets to be traversed.
The work in~\cite{marcucci2023motion} has shown that, for a limited class of costs and constraints, this discrete-continuous problem is solvable through a single convex program, using the framework called Graphs of Convex Sets (GCS)~\cite{marcucci2024shortest,marcucci2024graphs}.
The extensions of GCS in~\cite{morozov2024multi,chia2024gcs} have enabled the solution of larger problems in a fraction of the time.
The motion planner in~\cite{marcucci2024fast} tackles a similar problem, but first selects a discrete sequence of safe sets using a heuristic, and later optimizes a continuous trajectory within these fixed sets.
This split sacrifices optimality but preserves completeness, and enables support for a broader range of costs and constraints.

This paper focuses on a problem similar to the one in the second phase of~\cite{marcucci2024fast}: we seek a trajectory that traverses a sequence of convex sets in minimum time, and satisfies convex velocity and acceleration constraints.
This is a purely continuous problem, but is nonconvex due to the joint optimization of the trajectory shape and timing.
Our contribution is a biconvex method, which we call Sequence of Convex Sets (SCS), that solves this problem effectively.
SCS starts by quickly producing a feasible trajectory.
Then, it alternates between two convex subproblems.
The first is obtained from the original nonconvex problem by fixing the points where the trajectory transitions from one safe set to the next. 
The second is derived similarly, by fixing the transition velocities.

As most multi-convex methods~\cite{shen2017disciplined}, SCS is heuristic: it typically finds high-quality trajectories quickly, but might not converge to the problem optimum, or within a given distance of it.
On the other hand, SCS is complete (i.e., guaranteed to find a feasible solution).
Its main algorithmic advantage is that the two convex subproblems are conservative approximations of the original nonconvex problem.
This allows us to take whole steps in the direction their optima without using a line search or a trust region, as done in~\cite{marcucci2024fast} and other trajectory-optimization methods~\cite{schulman2014motion,ichnowski2020gomp}.
This makes the convergence of SCS fast and monotone, and eliminates any parameter tuning.
Furthermore, it makes our algorithm anytime (it returns a feasible trajectory even if stopped early).

We show that SCS consistently finds high-quality trajectories in a fraction of the time of the state-of-the-art solvers \texttt{SNOPT}~\cite{gill2005snopt} and \texttt{IPOPT}~\cite{wachter2006implementation}.
We also demonstrate SCS on the task of transferring packages between bins using two Sparrow robots.
In this task, SCS designs lower-cost trajectories than the trust-region method from~\cite{marcucci2024fast}, and achieves runtimes comparable to waypoint-based methods that are commonly used in industry.

\subsection{Outline}

This paper is organized as follows.
In \S\ref{sec:statement}, we state our motion-planning problem and, in \S\ref{sec:bicvx}, we give a high-level overview of SCS.
The details on the two convex subproblems and the initialization step are illustrated in \S\ref{sec:fixed_velocities}, \S\ref{sec:fixed_points}, and \S\ref{sec:initialization}.
Up to this point, we work only with infinite-dimensional trajectories.
In \S\ref{sec:num}, we show how our method can be efficiently implemented on a computer by using piecewise B\'ezier curves as a finite-dimensional trajectory parameterization.
The strengths and limitations of SCS are discussed in \S\ref{sec:strengths} and \S\ref{sec:limitations}.
In \S\ref{sec:experiments}, we demonstrate the effectiveness of SCS through a variety of numerical experiments.

\subsection{Notation and convexity background}
\label{sec:notation}

In this paper, the variable $i$ is always understood to be a positive integer.
Thus, when saying for all $i \leq I$ we mean for all $i \in \{1,\ldots, I\}$.
Conversely, the variable $k$ is always nonnegative, and $k \leq K$ is shorthand for $k \in \{0,\ldots, K\}$.

We use calligraphic letters to represent sets, and bold letters for vectors, vector-valued functions, and matrices.
We use the notation $\lambda \cS = \{\lambda \bx : \bx \in \cS\}$ to denote the product of a scalar $\lambda \in \reals$ and a set $\cS \subseteq \reals^n$.
We will use multiple times the fact that if a set $\cS$ is convex then also the set $\{(\bx, \lambda): \lambda \geq 0 \ \bx \in \lambda \cS\}$ is convex~\cite[\S2.3.3]{boyd2004convex}.
The latter set is easily computed in practice: for instance, if $\cS$ is a polytope of the form $\{\bx: \bA \bx \leq \bb\}$, then the condition $\bx \in \lambda \cS$ is equivalent to $\bA \bx \leq \lambda \bb$.
A similar formula can be used for any set $\cS$ described in standard conic form~\cite[Ex.~4.3]{marcucci2024shortest}.

%
%

\section{Problem Statement}
\label{sec:statement}

We seek a trajectory that traverses a sequence of convex sets in minimum time, subject to boundary conditions and convex velocity and acceleration constraints.
Fig.~\ref{fig:statement} shows a simple instance of this problem, and illustrates our notation.

\begin{figure}
\centering
\includegraphics[width=\columnwidth]{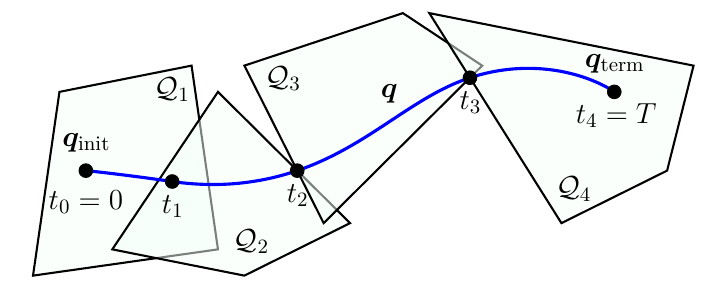}
\caption{Example of the motion-planning problem.
The safe convex sets to be traversed are $\cQ_1, \ldots, \cQ_4$.
The trajectory $\bq:[0,T] \rightarrow \reals^2$ is shown in blue.
The initial and terminal points are $\bq\init$ and $\bq\term$.
The times $t_0, \ldots, t_4$ determine the trajectory piece assigned to each safe set.}
\label{fig:statement}
\end{figure}

The sequence of safe convex sets is denoted as
$$
\cQ_1, \ldots, \cQ_I \subseteq \reals^n,
$$
where $I$ is the number of sets and $n$ is the space dimension.
Each set is assumed to be closed and intersect with the next:
\begin{align*}
& \cQ_i \cap \cQ_{i+1} \neq \emptyset,
&& i \irange{1}{I-1}.
\end{align*}
The trajectory is represented by the function $\bq : [0, T] \rightarrow \reals^n$, with time duration $T > 0$.
The initial $\bq(0)$ and terminal point $\bq(T)$ are fixed to $\bq\init \in \cQ_1$ and $\bq\term \in \cQ_I$, respectively.

The safe sets must be traversed in the given order, and no set can be skipped.
We denote with $t_1 \leq \ldots \leq t_{I-1}$ the \emph{transition times} at which the trajectory moves from one set to the next.
For simplicity of notation, we also let $t_0 = 0$ and $t_I = T$.
We then require that the trajectory $\bq(t)$ lie in the set $\cQ_i$ for all $t \in [t_{i-1}, t_i]$ and $i \irange{1}{I}$.

The trajectory velocity and the acceleration are denoted as $\dot \bq(t)$ and $\ddot \bq(t)$, respectively.
The first is assumed to be continuous, while the second is allowed to have discontinuities (i.e., $\bq$ is continuously differentiable).
The initial $\dot \bq(0)$ and terminal $\dot \bq(T)$ velocities are fixed to zero.
The trajectory derivatives must satisfy the constraints
$$
\dot \bq(t) \in \cV, \qquad \ddot \bq(t) \in \cA,
$$
at all times $t \in [0,T]$.
The sets $\cV$ and $\cA$ are closed and convex, and contain the origin in their interior:
$$
\bzero \in \interior(\cV), \quad \bzero \in \interior(\cA).
$$

Among the trajectories that verify the constraints above, we seek one of minimum time duration $T$.
This leads us to the following optimization problem:
\begin{subequations}
\label{eq:statement}
\begin{align}
\label{eq:statement_obj}
\minimize \quad & T \\
\subjectto \quad
\label{eq:statement_init}
& \bq(0) = \bq\init, \\
\label{eq:statement_term}
& \bq(T) = \bq\term, \\
\label{eq:statement_boundary_vel}
& \dot \bq(0) = \dot \bq(T) = \bzero, \\
\label{eq:statement_pos}
& \bq(t) \in \cQ_i, && t \in [t_{i-1}, t_i], \ i \irange{1}{I},\\
\label{eq:statement_vel}
& \dot \bq(t) \in \cV, && t \in [0, T], \\
\label{eq:statement_acc}
& \ddot \bq(t) \in \cA, && t \in [0, T], \\
\label{eq:statement_dur}
& t_i \leq t_{i+1}, && i \irange{1}{I-1}, \\
& t_0 = 0, \ t_I = T.
\end{align}
\end{subequations}
The variables are the trajectory $\bq$, the duration $T$, and the times $t_0, \ldots, t_I$.
The first makes the problem infinite dimensional.
The \emph{problem data} are the endpoints $\bq\init$ and $\bq\term$, the safe sets $\cQ_1, \ldots, \cQ_I$, and the constraint sets $\cV$ and $\cA$.
The differentiability constraint on the function $\bq$ is implicit here.

\subsection{Feasibility}

With the next proposition, we establish the feasibility of problem~\eqref{eq:statement}.
As in~\cite[\S{II}-C]{marcucci2024fast}, we do so by constructing a polygonal (i.e., piecewise linear) trajectory that satisfies all the problem constraints.
A similar construction will be used to initialize our method.

\begin{proposition}
\label{prop:feas}
If the problem data satisfy the assumptions listed above, then problem~\eqref{eq:statement} is feasible.
\end{proposition}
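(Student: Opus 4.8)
My plan is to prove feasibility constructively, by exhibiting an explicit trajectory, mirroring the polygonal construction of \cite[\S II-C]{marcucci2024fast}. First I would fix the transition points. Set $\br_0 = \bq\init$, $\br_I = \bq\term$, and for each index $i$ from $1$ to $I-1$ pick any $\br_i \in \cQ_i \cap \cQ_{i+1}$; such a point exists because the intersection is assumed nonempty. I then join consecutive waypoints by straight segments, taking the $i$-th segment to run from $\br_{i-1}$ to $\br_i$. Both endpoints of this segment lie in $\cQ_i$: indeed $\br_{i-1} \in \cQ_{i-1} \cap \cQ_i \subseteq \cQ_i$ (or $\br_0 = \bq\init \in \cQ_1$ for $i=1$), and $\br_i \in \cQ_i \cap \cQ_{i+1} \subseteq \cQ_i$ (or $\br_I = \bq\term \in \cQ_I$ for $i=I$). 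Since $\cQ_i$ is convex, the whole segment lies in $\cQ_i$, so the position constraint \eqref{eq:statement_pos} is met for any time law that traverses the segments in order.

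The main obstacle is that this polygonal path is merely continuous, with undefined velocity at the corners, whereas \eqref{eq:statement} demands a continuously differentiable $\bq$ with $\dot\bq(0) = \dot\bq(T) = \bzero$. I would resolve this with a stop-and-go time parameterization that brings the trajectory to rest at every waypoint. Concretely, I traverse segment $i$ over a time interval of length $\tau_i$ via $\bq(t) = \br_{i-1} + s_i(t)\,(\br_i - \br_{i-1})$, where $s_i$ is a fixed $C^1$ scalar profile with $s_i = 0, 1$ and $\dot s_i = 0$ at the two ends of the interval (for instance a cubic). The vanishing endpoint derivatives make the velocity equal to $\bzero$ on both sides of each junction, so the concatenated trajectory is $C^1$, and they also yield $\dot\bq(0) = \dot\bq(T) = \bzero$. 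The acceleration may jump at the junctions, which \eqref{eq:statement} permits.

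It then remains to satisfy the velocity and acceleration bounds \eqref{eq:statement_vel}–\eqref{eq:statement_acc}, and this is exactly where the interior assumptions are used. On segment $i$ we have $\dot\bq = \dot s_i\,(\br_i - \br_{i-1})$ and $\ddot\bq = \ddot s_i\,(\br_i - \br_{i-1})$; rescaling a single reference profile to an interval of length $\tau_i$ makes $\max |\dot s_i|$ scale like $1/\tau_i$ and $\max |\ddot s_i|$ like $1/\tau_i^2$. Because $\bzero \in \interior(\cV)$ and $\bzero \in \interior(\cA)$, there are balls about the origin contained in $\cV$ and in $\cA$; hence for $\tau_i$ large enough both $\dot\bq(t)$ and $\ddot\bq(t)$ remain inside these balls, and therefore inside $\cV$ and $\cA$, for all $t$ on the segment. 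Choosing each $\tau_i$ this way, setting $T = \sum_i \tau_i$ and the transition times $t_i$ to the partial sums, produces a trajectory satisfying every constraint of \eqref{eq:statement}, which establishes feasibility. Degenerate segments with $\br_{i-1} = \br_i$ require no special treatment, as the corresponding piece is simply constant.
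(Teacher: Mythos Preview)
Your argument is correct and follows essentially the same approach as the paper: a polygonal stop-and-go trajectory through points $\br_i \in \cQ_i \cap \cQ_{i+1}$, with the velocity brought to zero at every waypoint to achieve $C^1$ continuity and the traversal times made long enough to satisfy \eqref{eq:statement_vel}--\eqref{eq:statement_acc} via the interior assumption. Your version is in fact more detailed than the paper's sketch, making explicit the scalar profile $s_i$ and the $1/\tau_i$, $1/\tau_i^2$ scaling that the paper only alludes to with ``well spaced.''
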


\begin{proof}
We construct a trajectory $\bq$ that starts at $\bq(t_0) = \bq\init$, terminates at $\bq(t_I) = \bq\term$, and interpolates any \emph{transition points} $\bq(t_i) \in \cQ_i \cap \cQ_{i+1}$ for $i \irange{1}{I-1}$.
For $i \irange{1}{I}$, the trajectory piece within the set $\cQ_i$ connects $\bq(t_{i-1})$ and $\bq(t_i)$ through a straight line.
Thus, the overall trajectory stays within the union of the safe sets and traverses them in the desired order.
We let the times $t_0, \ldots, t_I$ be well spaced, so that the velocity $\dot \bq$ and the acceleration $\ddot \bq$ can be small enough to lie in the constraint sets $\cV$ and $\cA$ at all times (recall that these sets contain the origin in their interior).
Finally, we require that the velocity be zero at each time $t_0, \ldots, t_I$.
This makes the velocity continuous, even though the trajectory is polygonal.
The resulting trajectory is feasible for problem~\eqref{eq:statement}.
\end{proof}

\subsection{Positive traversal times}

According to constraint~\eqref{eq:statement_dur}, the \emph{traversal time} $T_i = t_i - t_{i-1}$ of a safe set $\cQ_i$ can be zero.
This can be optimal if, e.g., a safe set is lower dimensional or our trajectory touches it only at an extreme point.
However, our biconvex method will assume that the traversal times $T_i$ are strictly positive, for all $i \irange{1}{I}$.
The following is a simple sufficient condition on the problem data that ensures this.
It forces our trajectory to cover a nonzero distance within each safe set.

\begin{assumption}
\label{ass:zero_time}
The boundary points and the safe sets are such that $\bq\init \notin \cQ_2$, $\bq\term \notin \cQ_{I-1}$, and
\begin{align*}
& \cQ_i \cap \cQ_{i+1} \cap \cQ_{i+2} = \emptyset,
&& i \irange{1}{I-2}.
\end{align*}
\end{assumption}

We will let this assumption hold throughout the paper, so that zero traversal times will always be infeasible in our optimization problems.
Alternatively, our algorithm can be easily modified to incorporate a small lower bound on the traversal times.
For most practical problems, this modification has a negligible effect on the optimal trajectories.

\section{Biconvex Method}
\label{sec:bicvx}

We give a high-level overview of our biconvex method here, deferring the details to later sections.

The observation at the core of SCS is that problem~\eqref{eq:statement} reduces to a convex program if we fix either the \emph{transition points} or the \emph{transition velocities}:
$$
\bq(t_1), \ldots, \bq(t_{I-1}),
\qquad 
\dot \bq(t_1), \ldots, \dot \bq(t_{I-1}).
$$
(More precisely, this is true modulo a small conservative approximation of the acceleration constraint~\eqref{eq:statement_acc}, which relies on an estimate of the traversal times.)
In these convex programs, the transition points or velocities are fixed, but the rest of the trajectory is optimized.

This observation motivates the method illustrated in Fig.~\ref{fig:algo} for solving problem~\eqref{eq:statement}:
\begin{itemize}
\item
\emph{Initialization (1st panel).}
We compute a polygonal trajectory that connects the initial $\bq\init$ and terminal point $\bq\term$, and has short time duration.
This is designed through a small number of convex programs.
\item
\emph{Fixed transition points (2nd panel).}
We fix the transition points $\bq(t_1), \ldots, \bq(t_{I-1})$ of the polygonal trajectory, and use its traversal times $T_1, \ldots, T_I$ to approximate the acceleration constraints.
This leads to a convex subproblem that improves the polygonal trajectory.
\item
\emph{Fixed transition velocities (3rd panel).}
We fix the transition velocities $\dot \bq(t_1), \ldots, \dot \bq(t_{I-1})$ of the improved trajectory, and use its traversal times $T_1, \ldots, T_I$ to approximate the acceleration constraints.
This leads to another convex subproblem that further improves our solution.
\item
\emph{Iterations (4th panel).}
We keep refining our trajectory by solving the two convex subproblems in alternation.
\item 
\emph{Termination (5th panel).}
We terminate when the relative objective decrease of an iteration is smaller than a fixed tolerance $\varepsilon \in (0,1]$.
The objective decrease is measured between any two consecutive subproblems of the same kind (fixed transition points or velocities).
\end{itemize}

\begin{figure}[t!]
\centering
\includegraphics[width=\columnwidth]{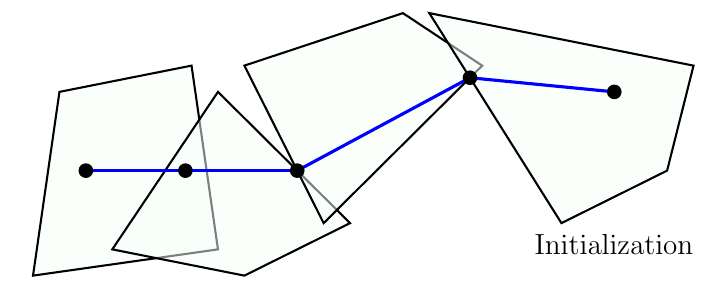} \\ \vspace{-1mm}
\includegraphics[width=\columnwidth]{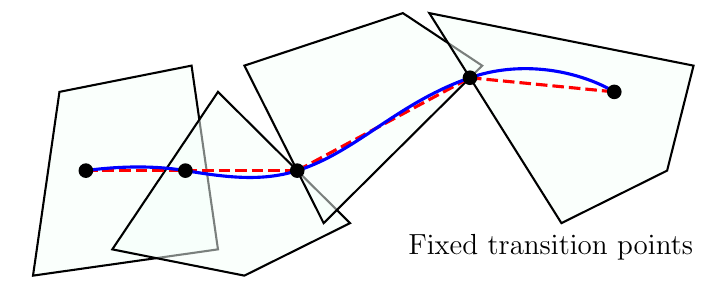} \\ \vspace{-1mm}
\includegraphics[width=\columnwidth]{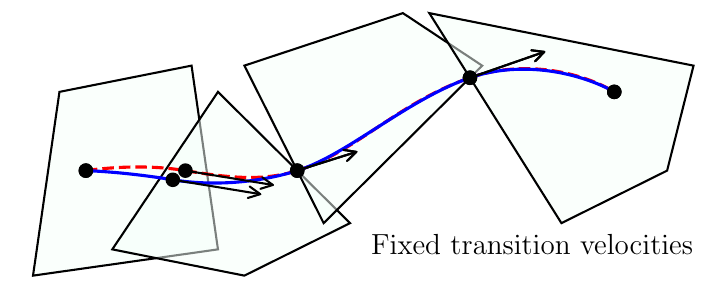} \\ \vspace{-1mm}
\includegraphics[width=\columnwidth]{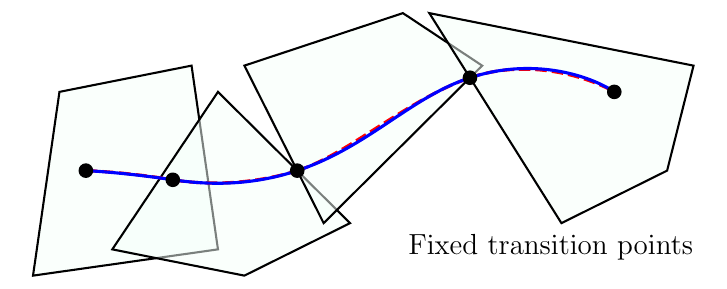} \\
\vspace{-15pt} $\thickvdots$ \\
\includegraphics[width=\columnwidth]{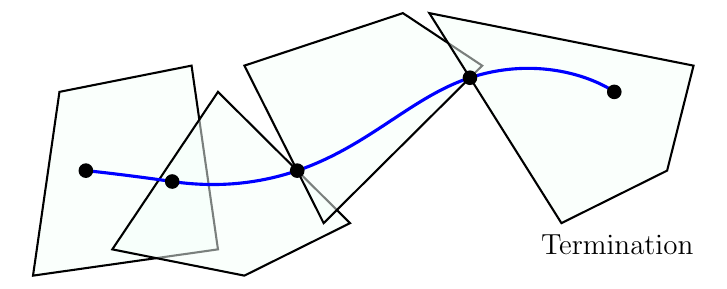}
\caption{Steps of SCS during the solution of the problem in Fig.~\ref{fig:statement}.
In the initialization (1st panel), we quickly compute a feasible polygonal trajectory.
Then, we alternate between a convex subproblem with fixed transition points and one with fixed transition velocities (2nd to 4th panels).
We terminate when the cost decrease of an iteration is small enough (5th panel).
The trajectory computed at each iteration (solid blue) is overlaid on the trajectory from the previous iteration (dashed red).
}
\label{fig:algo}
\end{figure}

The following sections detail our algorithm.
We first illustrate the subproblem with fixed transition velocities, then the one with fixed transition points, and lastly the initialization step.
This order simplifies the exposition, although it is the opposite order of how these steps appear in our algorithm.

\section{Subproblem with Fixed Transition Velocities}
\label{sec:fixed_velocities}

This section illustrates the convex subproblem with fixed transition velocities $\dot \bq(t_1), \ldots, \dot \bq(t_{I-1})$.
First, we formulate problem~\eqref{eq:statement} as a more tractable nonconvex program.
Then, we convexify this program by fixing the transition velocities and approximating the acceleration constraint~\eqref{eq:statement_acc}.

\subsection{Change of variables}

We parameterize the trajectory within each safe set $\cQ_i$ using a function $\bq_i:[0, 1] \rightarrow \reals^n$ and a scalar $T_i > 0$.
These decide the trajectory shape and traversal time, respectively.
We also define the function $h_i(t) = (t - t_{i-1}) / T_i$
that maps the interval of time $[t_{i-1}, t_i]$ assigned to the set $\cQ_i$ to the unit interval $[0,1]$.
This allows us to reconstruct our trajectory as
$$
\bq(t) = \bq_i (h_i(t)),
$$
for all $t \in [t_{i-1}, t_i]$ and $i \irange{1}{I}$.
By differentiating the last equality, we obtain the following expressions for the trajectory velocity and acceleration:
$$
\dot \bq(t) = \frac{\dot \bq_i (h_i(t))}{T_i} , \qquad
\ddot \bq(t) = \frac{\ddot \bq_i (h_i(t))}{T_i^2},
$$
which hold for all $t \in [t_{i-1}, t_i]$ and $i \irange{1}{I}$.

\subsection{Nonconvex formulation}

We express problem~\eqref{eq:statement} in terms of the new variables.
The objective function~\eqref{eq:statement_obj} simply becomes
\begin{align}
\label{eq:pos_form_obj}
\sum_{i=1}^I T_i.
\end{align}
The boundary conditions in~\eqref{eq:statement_init} to~\eqref{eq:statement_boundary_vel} become
\begin{align}
\label{eq:pos_form_bound}
\bq_1(0) = \bq\init, \quad
\bq_I(1) = \bq\term, \quad
\dot \bq_1(0) = \dot \bq_I(1) = \bzero,
\end{align}
where in the last constraint we canceled the traversal times $T_1$ and $T_I$ since the right-hand side is zero.
The next conditions ensure that the trajectory and its derivative are continuous:
\begin{subequations}
\label{eq:pos_form_cont}
\begin{align}
\label{eq:pos_form_cont_pos}
& \bq_i(1) = \bq_{i+1}(0), && i \irange{1}{I-1}, \\
\label{eq:pos_form_cont_vel}
& \frac{\dot \bq_i(1)}{T_i} = \frac{\dot \bq_{i+1}(0)}{T_{i+1}}, && i \irange{1}{I-1}.
\end{align}
\end{subequations}
The constraints in~\eqref{eq:statement_pos},~\eqref{eq:statement_vel}, and~\eqref{eq:statement_acc} become
\begin{subequations}
\label{eq:pos_form_const}
\begin{align}
\label{eq:pos_form_pos}
& \bq_i(s) \in \cQ_i, && s \in [0,1], \ i \irange{1}{I}, \\
\label{eq:pos_form_vel}
& \dot \bq_i(s) \in T_i \cV, && s \in [0,1], \ i \irange{1}{I}, \\
\label{eq:pos_form_acc}
& \ddot \bq_i(s) \in T_i^2 \cA, && s \in [0,1], \ i \irange{1}{I},
\end{align}
\end{subequations}
where we multiplied both sides of the velocity and the acceleration constraints by $T_i$ and $T_i^2$, respectively.
Finally, constraint~\eqref{eq:statement_dur} results in
\begin{align}
\label{eq:pos_form_dur}
& T_i > 0, && i \irange{1}{I},
\end{align}
where zero traversal times are excluded because of Assumption~\ref{ass:zero_time}.

Overall, problem~\eqref{eq:statement} is reformulated as
\begin{equation}
\label{eq:pos_form}
\begin{aligned}
\minimize \quad & \eqref{eq:pos_form_obj} \\
\subjectto \quad &  \text{\eqref{eq:pos_form_bound} to \eqref{eq:pos_form_dur}},
\end{aligned}
\end{equation}
with variables $\bq_i$ and $T_i$ for $i \irange{1}{I}$.
The objective and most of the constraints of this problem are linear.
The position constraint~\eqref{eq:pos_form_pos} is convex.
As mentioned in~\S\ref{sec:notation}, also the velocity constraint~\eqref{eq:pos_form_vel} is convex.
On the other hand, the velocity continuity~\eqref{eq:pos_form_cont_vel} and the acceleration constraint~\eqref{eq:pos_form_acc} are nonconvex.
Therefore, the overall problem is nonconvex.

\subsection{Convex restriction}

The next step is to construct a \emph{convex restriction} (informally, a convex inner approximation~\cite[\S2.1]{diamond2018general}) of the nonconvex constraints of problem~\eqref{eq:pos_form}.
To do so, we assume that the transition velocities have fixed value,
\begin{align*}
& \dot \bq(t_i) = \bv_i,
&& i \irange{1}{I-1},
\end{align*}
and that we are given nominal values $\bar T_i > 0$ for the traversal times $T_i$, for all $i \irange{1}{I}$.

With the transition velocities fixed, the velocity-continuity constraints~\eqref{eq:pos_form_cont_vel} become linear:
\begin{align}
\label{eq:fixed_vel_cont}
& \dot \bq_i(1) = \bv_i T_i, \
\dot \bq_{i+1}(0) = \bv_i T_{i+1},
&& i \irange{1}{I-1}.
\end{align}
To approximate the acceleration constraint~\eqref{eq:pos_form_acc}, we underestimate the convex function $T_i^2$ with its linearization around the nominal value $\bar T_i$:
\begin{align}
\label{eq:acc_lower_bound}
T_i^2 \geq \bar T_i (2 T_i - \bar T_i).
\end{align}
We then replace~\eqref{eq:pos_form_acc} with
\begin{subequations}
\label{eq:fixed_vel_acc_const}
\begin{align}
\label{eq:fixed_vel_acc_const_perspective}
& \ddot \bq_i(s) \in \bar T_i (2 T_i - \bar T_i)\cA, && s \in [0,1], \ i \irange{1}{I}, \\
\label{eq:fixed_vel_acc_const_min_dur}
& 2 T_i \geq \bar T_i,&& i \irange{1}{I}.
\end{align}
\end{subequations}
These constraints are convex (see again the discussion in~\S\ref{sec:notation}).
Moreover, they imply~\eqref{eq:pos_form_acc} because of the inequality~\eqref{eq:acc_lower_bound} and the assumption that the constraint set $\cA$ contains the origin.

Collecting all the pieces, we have the following convex restriction of problem~\eqref{eq:pos_form}:
\begin{equation}
\label{eq:fixed_vel}
\begin{aligned}
\minimize \quad & \eqref{eq:pos_form_obj} \\
\subjectto \quad
& \text{\eqref{eq:pos_form_bound} to \eqref{eq:pos_form_const} except \eqref{eq:pos_form_cont_vel} and \eqref{eq:pos_form_acc}}, \\
& \text{\eqref{eq:fixed_vel_cont} and \eqref{eq:fixed_vel_acc_const}}.
\end{aligned}
\end{equation}
Constraint~\eqref{eq:pos_form_dur} is omitted here since it is implied by~\eqref{eq:fixed_vel_acc_const_min_dur}.
Given a feasible trajectory with transition velocities $\bv_1, \ldots, \bv_{I-1}$ and traversal times $\bar T_1, \ldots, \bar T_I$, this problem yields another feasible trajectory with lower or equal cost.

\section{Subproblem with Fixed Transition Points}
\label{sec:fixed_points}

We now illustrate the convex subproblem with fixed transition points $\bq(t_1), \ldots, \bq(t_{I-1})$.
In the previous section, we parameterized the trajectory at the ``position level'' using the functions $\bq_i$ for $i\leq I$.
The velocity and acceleration were $\dot \bq_i/T_i$ and $\ddot \bq_i/T_i^2$, respectively.
This choice made all the position constraints convex, and gave us some nonconvex velocity and acceleration constraints.
Here, we parameterize the trajectory at the ``velocity level'' using the functions $\dot \br_i = \dot \bq_i/T_i$.
We recover the position as $T_i \br_i$ and the acceleration as $\ddot \br_i/T_i$.
Furthermore, we work with the reciprocals $S_i = 1 /T_i$ of the traversal times.
This yields a problem equivalent to~\eqref{eq:pos_form} where all the velocity constraints are convex, and the nonconvexities are only at the position and acceleration levels:
\begin{subequations}
\label{eq:vel_form}
\begin{align}
\label{eq:vel_form_obj}
\minimize \quad & \sum_{i=1}^I \frac{1}{S_i} \\
\label{eq:vel_form_bound_pos}
\subjectto \quad
& \br_1(0) = S_1 \bq\init, \\
& \br_I(1) = S_I \bq\term, \\
\label{eq:vel_form_bound_vel}
& \dot \br_1(0) = \dot \br_I (1) = \bzero, \\
\label{eq:vel_form_cont_pos}
& \frac{\br_i(1)}{S_i} = \frac{\br_{i+1}(0)}{S_{i+1}}, && i \irange{1}{I-1}, \\
\label{eq:vel_form_cont_vel}
& \dot \br_i(1) = \dot \br_{i+1}(0), && i \irange{1}{I-1}, \\
\label{eq:vel_form_pos}
& \br_i(s) \in S_i \cQ_i, && s \in [0,1], \ i \irange{1}{I}, \\
\label{eq:vel_form_vel}
& \dot \br_i(s) \in \cV, && s \in [0,1], \ i \irange{1}{I}, \\
\label{eq:vel_form_acc}
& \ddot \br_i(s) \in (1/S_i) \cA, && s \in [0,1], \ i \irange{1}{I}, \\
\label{eq:vel_form_dur}
& S_i > 0, && i \irange{1}{I}.
\end{align}
\end{subequations}
Observe that the objective of this problem is still convex, even though we work with the traversal-time reciprocals.
The only nonconvex constraints are the position continuity~\eqref{eq:vel_form_cont_pos} and the acceleration constraint~\eqref{eq:vel_form_acc}, which have the same structure as the constraints~\eqref{eq:pos_form_cont_vel} and~\eqref{eq:pos_form_acc}, respectively.

We proceed as in the previous section to construct a convex restriction of problem~\eqref{eq:vel_form}.
This time we assume that the transition points are fixed:
\begin{align*}
& \bq(t_i) = \bp_i,
&& i \irange{1}{I-1}.
\end{align*}
This makes the position continuity~\eqref{eq:vel_form_cont_pos} linear:
\begin{align}
\label{eq:fixed_pos_cont}
\br_i(1) = \bp_i S_i, \
\br_{i+1}(0) = \bp_i S_{i+1},
&& i \irange{1}{I-1}.
\end{align}
To approximate the acceleration constraint~\eqref{eq:vel_form_acc}, we assume again  that we are given nominal values $\bar T_i > 0$ of the traversal times.
We underestimate the convex function $1/S_i$ with its linearization around the nominal point:
$$
\frac{1}{S_i} \geq \bar T_i (2 - \bar T_i S_i).
$$
This gives us the following convex restriction of the acceleration constraint:
\begin{subequations}
\label{eq:fixed_pos_acc_const}
\begin{align}
\label{eq:fixed_pos_acc_const_perspective}
& \ddot \br_i(s) \in \bar T_i (2 - \bar T_i S_i)\cA, && s \in [0,1], \ i \irange{1}{I}, \\
\label{eq:fixed_pos_acc_const_nonneg}
& 2 \geq \bar T_i S_i,&& i \irange{1}{I}.
\end{align}
\end{subequations}

Overall, the subproblem with fixed transition points is
\begin{equation}
\label{eq:fixed_pos}
\begin{aligned}
\minimize \quad & \eqref{eq:vel_form_obj} \\
\subjectto \quad
& \text{\eqref{eq:vel_form_bound_pos} to \eqref{eq:vel_form_dur} except \eqref{eq:vel_form_cont_pos} and \eqref{eq:vel_form_acc}}, \\
& \text{\eqref{eq:fixed_pos_cont} and \eqref{eq:fixed_pos_acc_const}}.
\end{aligned}
\end{equation}
This convex subproblem allows us to improve a given feasible trajectory with transition points $\bp_1, \ldots, \bp_{I-1}$ and traversal times $\bar T_1, \ldots, \bar T_I$.

\section{Initialization with Polygonal Trajectory}
\label{sec:initialization}

In the initialization of SCS we quickly identify a low-cost feasible trajectory for problem~\eqref{eq:statement}.
As in the proof of Proposition~\ref{prop:feas}, a natural candidate for this role is a polygonal trajectory that comes to a full stop at each ``kink.''

The shape of our polygonal trajectory is computed through the following convex program:
\begin{subequations}
\label{eq:polygonal}
\begin{align}
\minimize \quad
& \sum_{i=0}^{I-1} \| \bp_{i+1} - \bp_i\|_2\\
\subjectto \quad
& \bp_0 = \bq\init, \\
& \bp_I = \bq\term, \\
& \bp_i \in \cQ_i \cap \cQ_{i+1}, && i \irange{1}{I-1}.
\end{align}
\end{subequations}
Here the decision variables are the points $\bp_0, \ldots, \bp_I$ that the trajectory interpolates through straight lines (black dots in the first panel of Fig.~\ref{fig:algo}).
The objective minimizes the total Euclidean length of the trajectory.

Next, we select the \emph{vertices} of the polygonal trajectory, i.e., the points $\bp_i$ that do not lie on the line connecting $\bp_{i-1}$ to $\bp_{i+1}$.
(This condition can be efficiently checked using the triangle inequality.)
For ease of notation, we also include $\bp_0$ and $\bp_I$ in the list of vertices.
As an example, in the top panel of Fig.~\ref{fig:algo}, the only point that is not a vertex is $\bp_1$ (the second).

The initialization is completed by connecting each pair of consecutive vertices through a minimum-time trajectory segment, with zero velocity at the endpoints.
While these vertex-to-vertex problems could be solved in closed form when working with infinite-dimensional trajectories, in practice, we use a finite-dimensional trajectory parameterization, and it is convenient to formulate them as convex programs.
To this end, let us assume that we are connecting two vertices that are consecutive points $\bp_{i-1}$ and $\bp_i$.
(If not, we can proceed as follows and, afterwards, split the designed trajectory into pieces.)
The vertex-to-vertex problem can be formulated as a convex program similar to the nonconvex problem~\eqref{eq:vel_form}:
\begin{subequations}
\label{eq:parameterization_cvx}
\begin{align}
\minimize \quad
& T_i \\
\subjectto \quad
& \br_i(0) = S_i \bp_{i-1}, \\
& \br_i(1) = S_i \bp_i, \\
& \dot \br_i(0) = \dot \br_i(1) = \bzero, \\
& \dot \br_i(s) \in \cV,&& s \in [0,1], \\
& \ddot \br_i(s) \in T_i \cA,&& s \in [0,1], \\
& T_i \geq 1 / S_i, \ S_i > 0.
\end{align}
\end{subequations}
The variables are the traversal time $T_i$, its reciprocal $S_i$, and the function $\br_i :[0,1] \rightarrow \reals^n$ (which represents $\bq_i / T_i$).
The last constraint relaxes the nonconvex equality $T_i = 1 / S_i$ to a convex inequality.
However, this relaxation is lossless: in fact, given any feasible solution $\bar T_i$, $\bar S_i$, and $\bar \br_i$, the solution $T_i = \bar T_i$, $S_i = 1 / \bar T_i$, and $\br_i = \bar \br_i / (\bar T_i \bar S_i)$ is also feasible, has equal cost, and satisfies $T_i = 1 / S_i$.
In practice, we solve problem~\eqref{eq:parameterization_cvx} as a one-dimensional problem, leveraging the fact that its optimal trajectories are straight lines.
This accelerates our algorithm when working in high-dimensional spaces.

\section{Numerical Implementation}
\label{sec:num}

The numerical implementation of our method requires a finite-dimensional trajectory parameterization.
In some special cases, it is possible to use a parameterization that captures the infinite-dimensional optimum of problem~\eqref{eq:statement}.
However, in general, optimal trajectories can be quite complex, and some approximation error is unavoidable.

B\'ezier curves have been widely used in motion planning, and enjoy several properties that make them particularly well suited for our problems.
In this section, we first collect some basic definitions and properties of B\'ezier curves, following~\cite[\S{V}-A]{marcucci2024fast}.
Then we show how the infinite-dimensional problems in the previous sections can be translated into efficient finite-dimensional programs.

\subsection{B\'ezier curves}
\label{sec:bez}

B\'ezier curves are constructed using Bernstein polynomials.
The \emph{Bernstein polynomials} of degree $K$ are defined over the interval $[0,1] \subset \reals$ as follows:
\begin{align}
\label{eq:berst}
\beta_k (s) = \binom{K}{k}
s^k
(1 - s)^{K-k},
\quad k \krange{0}{K}.
\end{align}
(Recall that in this paper $k$ is nonnegative and $k \leq K$ is shorthand for $k \in \{0,\ldots,K\}$.)
The Bernstein polynomials are nonnegative and, by the binomial theorem, sum up to one.
Therefore, the scalars $\beta_0(s), \ldots, \beta_K(s)$ represent the coefficients of a convex combination for all $s \in [0,1]$.
We use these coefficients to combine a given set of \emph{control points} $\bgamma_0, \ldots, \bgamma_K \in \reals^n$, and obtain a \emph{B\'ezier curve}:
\begin{align}
\label{eq:bez}
\bgamma(s) = \sum_{k=0}^K \beta_k (s)  \bgamma_k.
\end{align}
The function $\bgamma:[0,1] \rightarrow \reals^n$ is a (vector-valued) polynomial of degree $K$.
Fig.~\ref{fig:bez} shows a B\'ezier curve of degree $K=4$ in $n=2$ dimensions.

\begin{figure}
\centering
\includegraphics[width=\columnwidth]{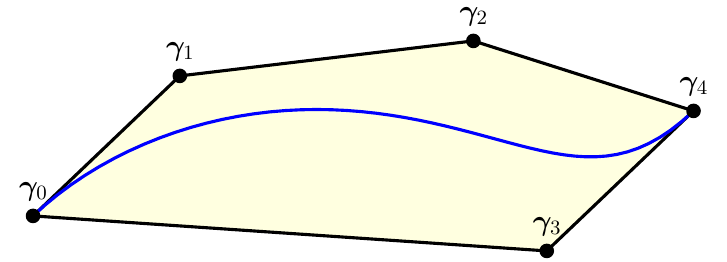}
\caption{A two-dimensional B\'ezier curve with control points $\gamma_0, \ldots, \gamma_4$.
The area shaded in yellow is the convex hull of the control points.}
\label{fig:bez}
\end{figure}

The following are a few selected properties of B\'ezier curves.
We refer to~\cite{farouki1988algorithms} for a more comprehensive list.


\begin{property}[Derivative]
\label{prop:der}
The derivative $\dot \gamma$ of the B\'ezier curve $\bgamma$ is a B\'ezier curve of degree $K-1$.
Its control points are computed via the linear difference equation
$$
\dot \gamma_k = K (\bgamma_{k+1} - \bgamma_k),
\quad k \krange{0}{K-1}.
$$
\end{property}

\begin{property}[Endpoint]
\label{prop:endpoint}
The B\'ezier curve $\bgamma$ starts at its first control point and ends at its last control point:
$$
\bgamma(0) = \bgamma_0, \qquad
\bgamma(1) = \bgamma_K.
$$
\end{property}

\begin{property}[Convex hull]
\label{prop:ch}
The B\'ezier curve $\bgamma$ is contained in the convex hull of its control points at all times:
$$
\bgamma(s) \in \conv (\{\bgamma_0, \ldots, \bgamma_K\}),
\quad s \in [0,1].
$$
This convex hull is shaded in yellow in Fig.~\ref{fig:bez}.
\end{property}



\subsection{Finite-dimensional trajectory parameterization}

When solving the programs~\eqref{eq:fixed_vel},~\eqref{eq:fixed_pos} and~\eqref{eq:parameterization_cvx} numerically, we restrict our trajectory segments ($\bq_i$ or $\br_i$) to B\'ezier curves of degree $K$, and enforce all the necessary constraints leveraging the properties above.
Property~\ref{prop:der} tells us that the trajectory velocity and acceleration are also piecewise B\'ezier curves, of degree $K-1$ and $K-2$, respectively.
Using Property~\ref{prop:endpoint}, we can then easily enforce any boundary or continuity condition by constraining the first and last control points of our B\'ezier curves.
The containment of a trajectory segment (or its derivatives) in a convex set can be enforced using Property~\ref{prop:ch}: if all the control points of a B\'ezier curve lie in a convex set, then so does the whole curve.
In the initialization step, we might also have to split a trajectory segment, obtained by solving problem~\eqref{eq:parameterization_cvx}, into multiple pieces.
This is easily done by using De Casteljau's algorithm~\cite[\S2.4]{farouki1988algorithms}.

For completeness, in \S\ref{app:finite_dim}, we report  the finite-dimensional versions of the convex programs~\eqref{eq:fixed_vel},~\eqref{eq:fixed_pos}, and~\eqref{eq:parameterization_cvx}.
We also report the finite-dimensional version of the nonconvex program~\eqref{eq:pos_form}, which will serve as a baseline in the experiments below.

\section{Strengths}
\label{sec:strengths}

This section illustrates the main strengths of SCS.

\subsection{Convergence and completeness}

Under our assumptions on the problem data, SCS is guaranteed to converge monotonically.
In fact, the initialization step must succeed, since problems~\eqref{eq:polygonal} and~\eqref{eq:parameterization_cvx} are feasible and admit an optimal solution.
Then, the convex subproblems~\eqref{eq:fixed_vel} and~\eqref{eq:fixed_pos} are guaranteed to produce trajectories that are not worse than the ones they are initialized with.
This makes our algorithm \emph{complete} (guaranteed to find a solution) and \emph{anytime} (returns a feasible solution even if stopped early).

These results extend to the finite-dimensional implementation of SCS from \S\ref{sec:num}, provided that our B\'ezier curves have degree $K \geq 3$.
This minimum degree is sufficient for our trajectory segments to represent straight lines with zero endpoint velocity, and ensures the success of the initialization step.
After that, the biconvex alternation can only improve our finite-dimensional trajectory.
Notably, our piecewise B\'ezier trajectories satisfy the constraints of problem~\eqref{eq:statement} at all continuous times, rather than at a finite set of times, as is common for sampling-based and trajectory-optimization methods.

\subsection{Optimality}

SCS is \emph{heuristic}: it is not guaranteed to find an optimal solution (global or local), or to converge within a fixed distance from one.
However, it typically finds high-quality trajectories in a fraction of the time of state-of-the-art solvers (see the experiments in \S\ref{sec:comparison}).
The trajectory parameterization using B\'ezier curves can also affect the optimality of our trajectories.
In this direction, we remark that a B\'ezier curve is as expressive as any polynomial of equal degree~\cite[\S1.3]{farouki1988algorithms}.
Another source of suboptimality are the conservative convex constraints obtained using Property~\ref{prop:ch}.
However, these constraints get arbitrarily accurate as the degree $K$ increases.
Potentially, we could also use exact containment conditions like sums of squares~\cite{parrilo2003semidefinite,deits2015efficient}, but this would make our programs much more expensive to solve.

\subsection{Computational efficiency}

The runtime of an iteration of SCS is polynomial in all the relevant problem data, and linear in the number $I$ of safe sets and the degree $K$ of the B\'ezier curves.
In fact, the subproblems~\eqref{eq:fixed_vel} and~\eqref{eq:fixed_pos} have banded structure, and are solvable in a time that is linear in $I$ and $K$ (see, e.g.,~\cite{wang2009fast}).
Problems~\eqref{eq:polygonal} and~\eqref{eq:parameterization_cvx} are also banded, and solvable in a time that is linear in $I$ and $K$, respectively.
In addition, the latter problem is solved at most $I$ times.

The overall time complexity of SCS is harder to quantify.
However, in practice, we observed that the number of iterations necessary for convergence is often insensitive to $I$ and $K$ (see the experiments in \S\ref{sec:comparison}).
This leads to overall runtimes that are often linear in $I$ and $K$.

\subsection{Limited parameter tuning}

SCS does not require the tuning of any step-size or trust-region parameter.
The only numerical values set by the user are the degree $K$ of the B\'ezier curves and the convergence tolerance $\varepsilon$.
The first should be at least three to ensure convergence, and can be increased to improve the solution quality.
For the second, we have found that $\varepsilon=0.01$ is sufficiently small for most problems.

\subsection{Advantages over existing methods}
\label{sec:related_methods}

As discussed in \S\ref{sec:intro}, SCS addresses a problem similar to the one in~\cite[\S{V}]{marcucci2024fast}.
Compared to that approach, SCS applies to a narrower set of motion-planning problems, but converges much faster (see experiments in \S\ref{sec:sparrow}).
This is because its subproblems are convex restrictions of the original nonconvex program, and at every iteration we can take a full step towards their optima.

The GCS motion planner from~\cite{marcucci2023motion} requires a convex trajectory parameterization within each safe set.
However, as also seen in this paper, this is very challenging when we optimize both the trajectory shape and timing, and imposes strict limitations on the types of costs and constraints that GCS can handle.
For instance, the method in~\cite{marcucci2023motion} can only enforce coarse approximations of the acceleration constraints~\eqref{eq:statement_acc}.
The recent work~\cite{yang2025new} proposes a semidefinite relaxation for these time-scaling problems, broadening the list of costs and constraints that GCS can accommodate but sacrificing the algorithm completeness.
Overall, GCS and SCS can be viewed as complementary methods, and can be combined in hybrid approaches where GCS provides an approximate solution to the high-level discrete-continuous problem and SCS refines the trajectory within a fixed sequence of safe sets.

Optimization problems similar to the one considered in this paper are also faced by UAV motion planners based on safe flight corridors~\cite{chen2016online,liu2017planning,wu2024optimal}.
However, these planners typically bypass the problem nonconvexity by fixing the corridor traversal times using heuristics, while here we optimize these times explicitly.

The main advantage of SCS over general-purpose methods for trajectory optimization is its reliability and completeness.
Furthermore, SCS can generate high-quality trajectories for complex planning problems within a few milliseconds (see \S\ref{sec:sparrow}).
In contrast, trajectory-optimization methods require a GPU to achieve comparable runtimes~\cite{sundaralingam2023curobo}.
Finally, most common trajectory-optimization methods do not take full advantage of the structure of minimum-time problems.

The minimum-distance problem~\eqref{eq:polygonal}, solved to initialize SCS, is similar to the problem addressed by common sampling-based methods.
This step is straightforward for us since we assume that the free space is represented as a sequence of convex sets.
Contrarily, sampling-based methods rely solely on a collision checker, which makes finding a minimum-distance curve significantly more challenging.
The work~\cite{werner2025superfast} explores a combined approach, where a sampling-based method is used to find a polygonal curve that is later inflated into a sequence of safe sets for SCS to plan through.

Finally, various convex relaxations and reformulations of time-optimal control and trajectory-tracking problems have been proposed over the years (see, e.g.,~\cite{verscheure2009time,lipp2014minimum,leomanni2022time,malyuta2022convex}).
However, none of these methods applies directly to the problem of designing trajectories through sequences of convex sets.

\section{Limitations}
\label{sec:limitations}

Our method has a few worth-noting limitations.
First of all, SCS is restricted to minimum-time problems.
However, a similar approach can be applied to problems with fixed final time and cost function that penalizes the magnitude of the trajectory velocity and acceleration.

SCS requires that the robot free space is described as a sequence of convex sets.
This description can be challenging to compute for high-dimensional problems and cluttered environments.
However, as mentioned in \S\ref{sec:intro}, many practical methods for decomposing complex spaces into convex sets are now available, and also GPU-based algorithms have been recently developed~\cite{werner2025superfast}.

The trajectories generated by SCS may have acceleration jumps, which can make them difficult to track on real hardware.
A simple workaround is to add a smoothing step.
Alternatively, we can ensure that the trajectory acceleration (as well as any higher-order derivative) is continuous by setting it to zero at the transition times.
This is easily seen to be a linear constraint.
A similar limitation is that SCS can only handle constraints on the velocity and acceleration but not, for example, on the trajectory jerk.

We have seen that SCS cannot handle problems where an optimal traversal time $T_i$ is zero (in which case the corresponding variable $S_i$ in the subproblem with fixed transition points~\eqref{eq:fixed_pos} is infinity).
Although Assumption~\ref{ass:zero_time} is sufficient to rule out this scenario, some practically relevant problems do not meet this assumption.
In these cases, we can enforce an artificial lower bound on the time spent in each safe set.

\section{Numerical Experiments}
\label{sec:experiments}

We demonstrate SCS on three numerical experiments.
First, we conclude the simple running example in Fig.~\ref{fig:statement} and~\ref{fig:algo} by reporting its solution statistics.
Second, we analyze the performance of SCS as a function of multiple problem data, and we compare it with state-of-the-art solvers for nonconvex optimization.
Finally, we demonstrate SCS on a minimum-time package-transfer problem with two Sparrow robots, and we benchmark it against other motion-planning methods.

The Python implementation of SCS used in the experiments below is available at
$$
\href{https://github.com/TobiaMarcucci/scsplanning}{\texttt{github.com/TobiaMarcucci/scsplanning}}.
$$
It is based on \texttt{Drake}~\cite{tedrake2019drake}, and uses the open-source solver \texttt{Clarabel}~\cite{goulart2024clarabel} for the convex programs.
All the experiments are run on a laptop with Apple M2 Pro processor and 16 GB of RAM.
The solvers \texttt{SNOPT}~\cite{gill2005snopt} and \texttt{IPOPT}~\cite{wachter2006implementation} are also called through \texttt{Drake}'s Python interface (and are warm started with the same polygonal trajectory as SCS).

\subsection{Running example}

We provide here the details of the running example illustrated in Fig.~\ref{fig:statement}.
The initial and terminal points are $\bq\init = (0,0)$ and $\bq\term = (10,1.5)$, respectively.
The geometry of the safe sets can be deduced from the figure.
The constraint sets $\cV$ and $\cA$ are circles centered at the origin of radius $10$ and $1$, respectively.
The trajectory in Fig.~\ref{fig:statement} has time duration $T=7.45$, and is designed by SCS with degree $K = 5$ and termination tolerance $\varepsilon = 0.01$.

The curves in Fig.~\ref{fig:algo} represent the actual iterations of SCS.
The initial polygonal trajectory has time duration $T=12.49$ (1st panel).
This value decreases to $8.82$ in the first subproblem with fixed transition points (2nd panel), then to $8.06$ and $7.51$ in the subsequent subproblems (3rd and 4th panels).
SCS converges after solving only five subproblems.

As a baseline for SCS, we solve the finite-dimensional version of the nonconvex program~\eqref{eq:pos_form} with \texttt{SNOPT} and \texttt{IPOPT}.
This problem is stated in \S\ref{app:finite_dim}, see~\eqref{eq:nonconv_finite}, and uses the same trajectory parameterization as SCS.
Both solvers yield the trajectory duration $T=7.40$, which is only $0.7\%$ shorter than ours.
Our simple Python implementation of SCS takes $10$~ms to converge, while \texttt{SNOPT} takes $21$~ms and \texttt{IPOPT} needs $261$~ms.
Note, however, that these solvers use smaller termination tolerances than SCS.
Increasing the optimality tolerances of the nonconvex solvers does not reduce their runtimes significantly.
Conversely, if we decrease the SCS tolerance to, e.g., $\varepsilon = 10^{-4}$, the objective gap between SCS and the nonconvex solvers decreases to $0.1\%$, but the runtime of SCS increases to $50$~ms.
This is typical for multi-convex methods: they find high-quality solutions quickly, but can be slow if we seek very accurate solutions~\cite{shen2017disciplined}.

\subsection{Runtime analysis and comparison with nonconvex solvers}
\label{sec:comparison}

We analyze the runtimes of SCS, \texttt{SNOPT}, and \texttt{IPOPT} as functions of several problem parameters: the number $I$ of safe sets, the number $m$ of facets of each safe set, the space dimension $n$, and the trajectory degree $K$.
We show that, across a wide range of problem instances, SCS finds low-cost trajectories more quickly and reliably than the two state-of-the-art solvers.

We construct an instance of problem~\eqref{eq:statement} where each safe set $\cQ_i$ represents one link of an $n$-dimensional staircase.
The safe sets are polytopes that approximate ellipsoids with increasing accuracy as their number $m$ of facets grows.
Fig.~\ref{fig:nom} shows an instance of this problem with the corresponding optimal trajectory.
In this instance, we have $I = 5$ safe sets in $n = 2$ dimensions, and each set has $m=4$ facets (rectangular safe sets).
More details on the construction of these problems are reported in \S\ref{app:staircase}.

\begin{figure}
\centering
\includegraphics[width=.65\columnwidth]{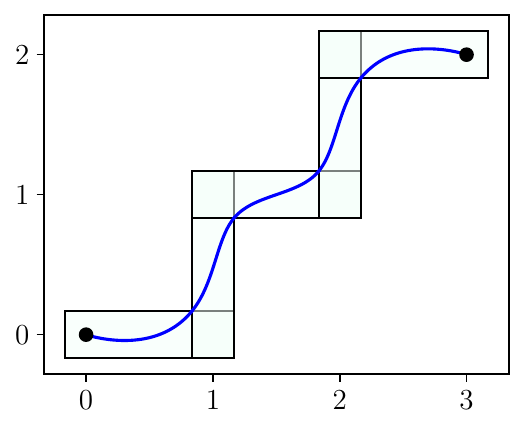}
\caption{Benchmark problem with $I=5$ safe sets in $n=2$ dimensions, with $m=4$ facets each. The optimal trajectory is shown in blue.}
\label{fig:nom}
\end{figure}

We consider a first batch of instances where we let the number $I$ of safe sets grow from $3$ to $3000$, while we fix the space dimension to $n=3$, the number of facets to $m=6$, and  the trajectory degree to $K=3$.
The top panel of  Fig.~\ref{fig:comparison} shows the runtimes of SCS, \texttt{SNOPT}, and \texttt{IPOPT}.
The two nonconvex solvers return trajectories with equal cost, when \texttt{SNOPT} does not fail or reach our time limit of $1$~h (missing markers in the figure).
SCS designs trajectories that have slightly higher cost ($1.2\%$ in the worst case).
SCS is faster in almost all instances: \texttt{SNOPT} and \texttt{IPOPT} have comparable runtimes only on the smallest and largest problems, respectively.
The runtimes of SCS increase a little more than linearly: as the number of safe sets grows by a factor of $1000$, its runtimes increase by $3060$.
The number of subproblems necessary for SCS to converge with tolerance $\varepsilon = 0.01$ ranges between $5$ and $8$.

\begin{figure}[t!]
\centering
\includegraphics[width=\columnwidth]{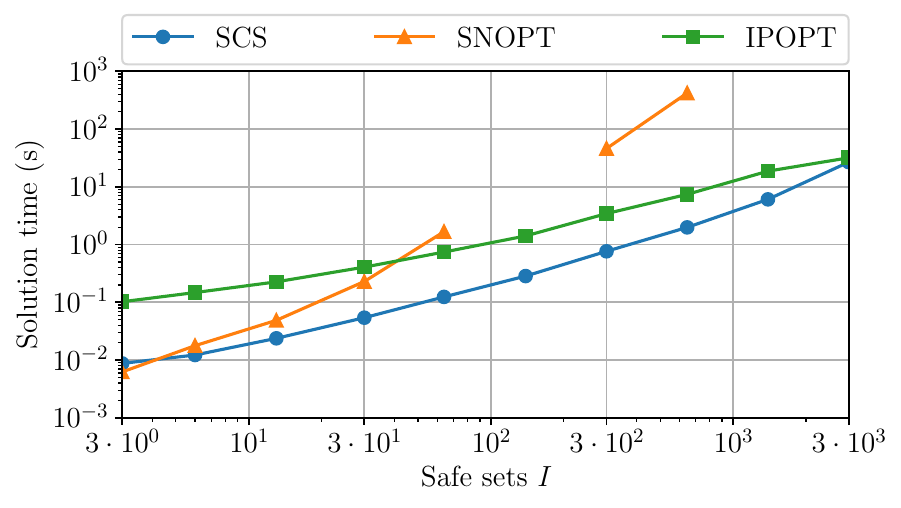} \\ \vspace{-1mm}
\includegraphics[width=\columnwidth]{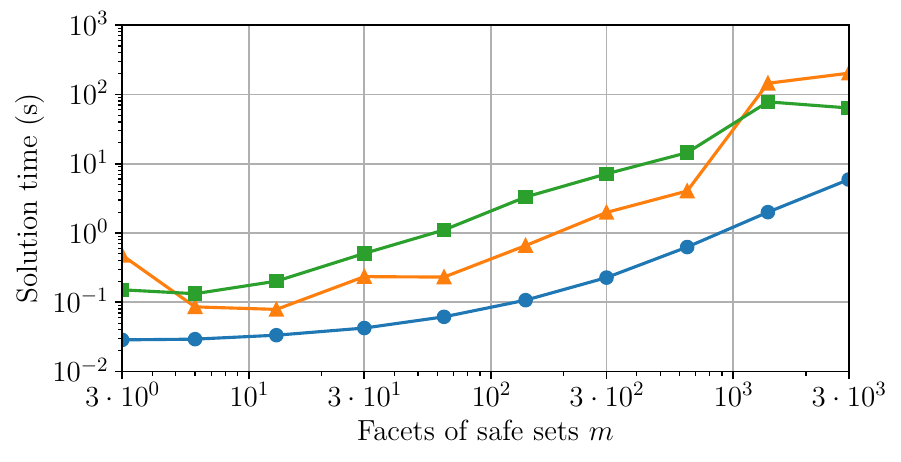} \\ \vspace{-1mm}
\includegraphics[width=\columnwidth]{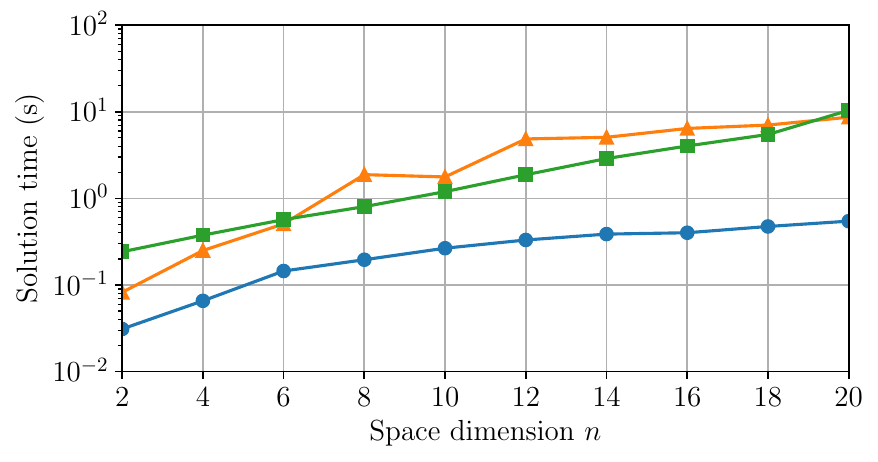} \\ \vspace{-1mm}
\includegraphics[width=\columnwidth]{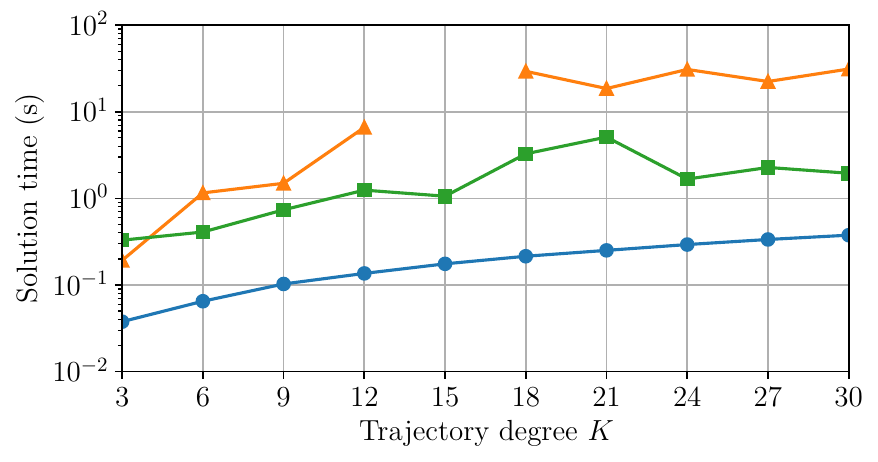}
\caption{Comparison of SCS with the solvers \texttt{SNOPT} and \texttt{IPOPT}.
The runtimes of the three methods are analyzed as functions of multiple problem data.
Missing markers correspond to solver failures.
The runtimes of SCS grow almost linearly in each experiment (note that the horizontal axis has logarithmic scale in the first two panels and linear scale in the last two).
}
\label{fig:comparison}
\end{figure}

The second panel in Fig.~\ref{fig:comparison} shows the effects of increasing the number $m$ of facets of the safe sets from $3$ to $3000$, while keeping $I=20$, $n=2$, and $K=5$.
In this case, SCS and the nonconvex solvers find identical trajectories (despite the larger termination tolerance of SCS).
SCS solves each problem much faster than \texttt{SNOPT} and \texttt{IPOPT}, and its runtimes increase sublinearly with $m$ (as the number of facets grows by $1000$, the runtime grows by $210$).
The number of subproblems necessary for SCS to converge is equal to $5$ for every value of $m$.

In the third panel of Fig.~\ref{fig:comparison}, we let the space dimension $n$ grow from $2$ to $20$, while we set $I=20$, $m=2n$, and $K=3$.
The nonconvex solvers find again identical trajectories, and SCS has a maximum cost gap of $3.2 \%$.
SCS is again the fastest, and its runtimes increase a little more than linearly with $n$ (the space dimension grows by $10$ and the runtimes by $17.6$).
The number of SCS subproblems ranges between $5$ and $16$.

In the fourth panel of Fig.~\ref{fig:comparison}, we let $I=20$, $m=6$, $n=3$, and increase the degree $K$ from $3$ to $30$.
All the methods return similar trajectories: the maximum cost difference between SCS and the nonconvex solvers is $0.4\%$.
SCS is the fastest and its runtimes grow linearly with $K$ (the degree increases by $10$ and the runtimes by $9.9$).
\texttt{IPOPT} performs better than \texttt{SNOPT}, which also fails in one instance.
SCS always converges after $5$ subproblems.

\subsection{Minimum-time package transfer with two Sparrow robots}
\label{sec:sparrow}

We use SCS to plan the motion of two Sparrow robots that transfer packages between bins in simulation.
We also benchmark SCS against the trust-region method proposed in~\cite[\S{V}]{marcucci2024fast}, as well as a simple waypoint-based motion planner representative of those commonly used in industry.

The package-transfer task is illustrated in Fig.~\ref{fig:experiment_setup}.
The two robots face each other, and between them is a table with two bins.
One bin contains ten packages and the other is empty.
The goal is to move all the packages in the first bin to the second as quickly as possible.
The final package positions in the second bin must mirror the initial positions in the first bin.
Packages are represented as axis-aligned boxes (these can be the packages themselves, or bounding boxes of products with more complex shape).
The bins have side $0.6$ and height $0.3$, and the distance between their centers is $1$.
The package sides are drawn uniformly at random between $0.1$ and $0.25$.
Also the initial package positions are drawn uniformly at random within the corresponding bin, and sampled packages are rejected when they collide with existing packages.

\begin{figure}
\centering
\includegraphics[width=.7\columnwidth]{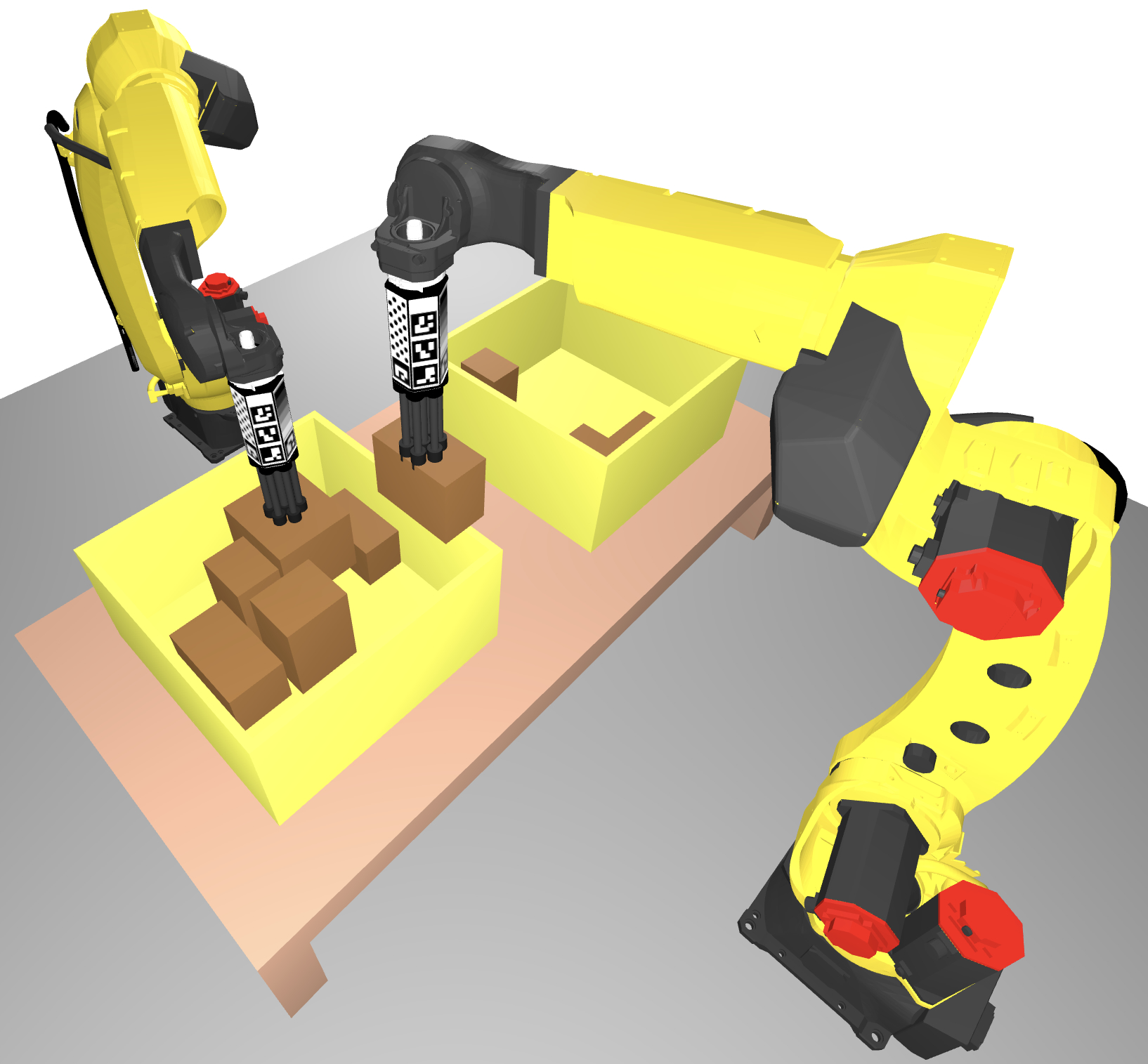}
\caption{Sparrow robots that move packages between bins in minimum time.}
\label{fig:experiment_setup}
\end{figure}

\begin{figure}
\centering
\includegraphics[width=\columnwidth]{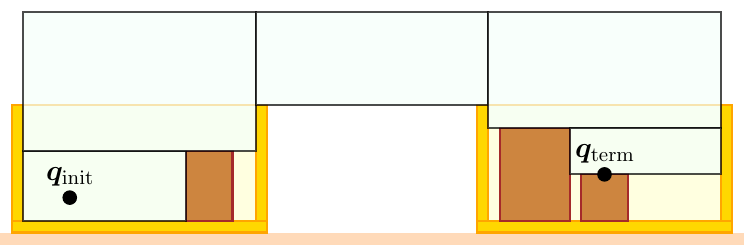} \\
\includegraphics[width=\columnwidth]{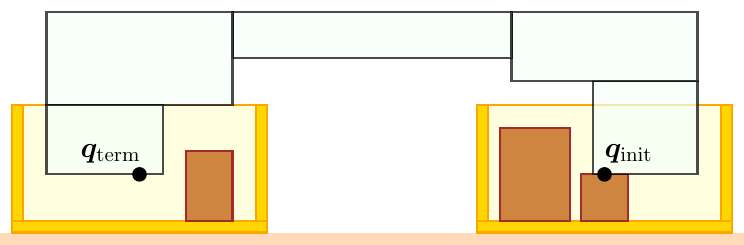}
\caption{Two-dimensional illustration of the three-dimensional safe sets $\cQ_i$ used for the package-transfer task.
The top panel shows the sets for picking the rightmost package.
The bottom panel shows the sets for its placement.
The latter are shrunk to avoid the collision of the transported package.}
\label{fig:robot_regions}
\end{figure}

We solve the task using a state machine.
At each iteration, if a robot has completed its previous pick or place motion, we plan its next motion neglecting the presence of the other robot.
If this results in a collision, we let the robot idle until the next iteration.
If the state machine stalls (neither arm can execute its motion without colliding with the other), we retract one arm and allow the other to move.
Each time a robot plans a picking motion, it targets the package closest to its side of the table.
Trajectories are planned directly in the three-dimensional task space, and the full robot configuration is retrieved through inverse kinematics.
We let the sets $\cV$ and $\cA$, that constrain the gripper velocity and acceleration, be spheres of radius $10$ centered at the origin.
(In practice, these sets can be shaped to prevent package delamination, and ensure that the robots can track the designed task-space trajectories.)
We use B\'ezier curves of degree $K=5$ and set the termination tolerance to $\varepsilon = 0.01$.

For each pick and place motion, the three-dimensional task space is decomposed into five box-shaped safe sets $\cQ_i$, illustrated in two dimensions in Fig.~\ref{fig:robot_regions}.
The first and fifth sets allow the gripper to reach the trajectory endpoints, without colliding with the packages in the bins.
The second and fourth sets cover the space above the packages in the two bins.
The third is a transfer region that connects the spaces above the bins.
As shown in the bottom panel of Fig.~\ref{fig:robot_regions}, these sets are shrunk during a place motion to avoid collisions of the transported package (packages are always picked above their centers).

We consider $50$ randomly generated package-transfer problems.
As the low-level motion planner for the state machine just described, we compare SCS against the following alternatives:
\begin{itemize}
\item 
The trust-region method from~\cite[\S{V}]{marcucci2024fast}, modified as described in \S\ref{app:fpp} to deal with minimum-time problems.
\item 
A simple waypoint-based motion planner, which lifts a package vertically, moves it horizontally above the desired destination, and places it down.
Where each trajectory segment is executed in minimum time.
\end{itemize}
The three methods use the same constraints and trajectory parameterization.
The first two share also the same initialization strategy and termination tolerance.
Tab.~\ref{tab:comparison} shows the statistics for the task-completion time and the runtime of each motion planner.
SCS generates the best trajectories: in fact, the average completion time for the overall package-transfer task is about $10$~s for SCS, $13$~s for the trust-region method, and $15$~s for the waypoint-based planner.
In other words, SCS allows us to transfer $28\%$ and $50\%$ more packages per unit of time than the trust-region and the waypoint-based planners, respectively.
The runtimes of SCS are approximately five times longer than those of the waypoint-based planner, but they remain very low for practical use.
The trust region method is roughly three times slower than SCS.
The videos of five of these package-transfer tasks are provided as Supplementary Material.

\begin{table}
\centering
\caption{Package-Transfer Benchmark}
\begin{tabular}{|c|c|c|c|c|c|c|}
\cline{2-7}
\multicolumn{1}{c|}{} &
\multicolumn{3}{|c|}{Task-completion time (s)}
& \multicolumn{3}{|c|}{Motion-planning time (ms)} \\
\hline
Planner & min & mean & max & min & mean & max \\
\hline \hline
SCS   & 8.86 & 9.96 & 11.34 & 4  & 15   & 49   \\
\hline
Trust region   & 10.43  & 12.73 & 14.87  & 15   & 46   & 163   \\
\hline
Waypoint & 13.17   & 14.97 & 16.74 & 2 & 3 & 12 \\
\hline
\end{tabular}
\label{tab:comparison}
\end{table}

We conclude by emphasizing that the trust-region and the waypoint-based planners are natural baselines for the task considered in this section.
The first provides the same completeness guarantees as SCS, designs smooth trajectories, and has relatively low runtimes.
The second is widespread in warehouse automation thanks to its good performance and high reliability.
In our experience, off-the-shelf nonconvex trajectory optimization faces significant challenges with this package-transfer task: it struggles with the many collision geometries in Fig.~\ref{fig:experiment_setup}, relies on handcrafted warm starts, can take seconds to converge (unless we use accelerated hardware~\cite{sundaralingam2023curobo}), and can also fail to converge.
Sampling-based planners can be more reliable, but generate polygonal curves that require additional smoothing.
They excel in tasks where finding a collision-free trajectory is the main challenge, and trajectory cost is secondary.
However, our package-transfer task presents the opposite challenge.

\bibliographystyle{plainnat}
\bibliography{references}

\appendices

\section{Finite-Dimensional Programs}
\label{app:finite_dim}

We illustrate the finite-dimensional versions of the programs presented in this paper.
Here, candidate trajectories are parameterized using B\'ezier curves as shown in \S\ref{sec:num}.

We start from the nonconvex program~\eqref{eq:pos_form}.
The constraints of its finite-dimensional counterpart are as follows (the objective is unchanged):
\begin{subequations}
\label{eq:nonconv_finite}
\begin{align}
& \bq_{1,0} = \bq\init, \\
& \bq_{I,K} = \bq\term, \\
& \dot \bq_{1,0} = \dot \bq_{I, K-1} = \bzero, \\
& \bq_{i,k} \in \cQ_i, && k \krange{0}{K}, \ i \irange{1}{I}, \\
& \dot \bq_{i,k} \in T_i \cV, && k \krange{0}{K-1}, \ i \irange{1}{I}, \\
\label{eq:nonconv_finite_acc}
& \ddot \bq_{i,k} \in T_i^2\cA, && k \krange{0}{K-2}, \ i \irange{1}{I}, \\
\label{eq:nonconv_finite_nonneg}
& T_i > 0, && i \irange{1}{I}, \\
& \bq_{i,K} = \bq_{i+1,0}, && i \irange{1}{I-1}, \\
\label{eq:nonconv_finite_cont_vel}
&  \dot \bq_{i,K-1} / T_i = \dot \bq_{i+1,0} / T_{i+1}, && i \irange{1}{I-1}, \\
& \dot \bq_{i,k} =K (\bq_{i,k+1} - \bq_{i,k}), && k \krange{0}{K-1}, \ i \irange{1}{I}, \\
& \ddot \bq_{i,k} = (K-1) (\dot \bq_{i,k+1} - \dot \bq_{i,k}), && k \krange{0}{K-2}, \ i \irange{1}{I}.
\end{align}
\end{subequations}
For all $i \leq I$, the variables here are the traversal times $T_i$ and the control points $\bq_{i,k}$ for $k \leq K$, $\dot \bq_{i,k}$ for $k \leq K-1$, and $\ddot  \bq_{i,k}$ for $k \leq K-2$.

The finite-dimensional version of the subproblem with fixed transition velocities~\eqref{eq:fixed_vel} differs from the nonconvex program~\eqref{eq:nonconv_finite} in just two ways.
First, the acceleration constraint~\eqref{eq:nonconv_finite_acc} and the traversal-time lower bound~\eqref{eq:nonconv_finite_nonneg} are replaced with the convex conditions
\begin{align*}
& \ddot \bq_{i,k} \in \bar T_i (2 T_i - \bar T_i)\cA, && k \krange{0}{K-2}, \ i \irange{1}{I}, \\
& 2 T_i \geq \bar T_i,&& i \irange{1}{I}.
\end{align*}
Second, the continuity constraint~\eqref{eq:nonconv_finite_cont_vel} is split into two linear constraints:
\begin{align*}
& \dot \bq_{i,K-1} = \bv_i T_i, \ \dot \bq_{i+1,0} = \bv_i T_{i+1}, && i \irange{1}{I-1}.
\end{align*}

The finite-dimensional version of the subproblem with fixed transition points~\eqref{eq:fixed_pos} has similar constraints:
\begin{align*}
& \br_{1,0} = S_1 \bq\init, \\
& \br_{I,K} = S_I \bq\term, \\
& \dot \br_{1,0} = \dot \br_{I, K-1} = \bzero, \\
& \br_{i,k} \in S_i \cQ_i, && k \krange{0}{K}, \ i \irange{1}{I}, \\
& \dot \br_{i,k} \in \cV, && k \krange{0}{K-1}, \ i \irange{1}{I}, \\
& \ddot \br_{i,k} \in \bar T_i (2 - \bar T_i S_i)\cA, && k \krange{0}{K-2}, \ i \irange{1}{I}, \\
& 0 < S_i \leq 2 / \bar T_i,&& i \irange{1}{I}, \\
& \br_{i,K} = \bp_i S_i, \ \br_{i+1,0} = \bp_i S_{i+1}, && i \irange{1}{I-1}, \\
& \dot \br_{i,K-1} = \dot \br_{i+1,0}, && i \irange{1}{I-1}, \\
& \dot \br_{i,k} = K (\br_{i,k+1} - \br_{i,k}), && k \krange{0}{K-1}, \ i \irange{1}{I}, \\
& \ddot \br_{i,k} = (K-1) (\dot \br_{i,k+1} - \dot \br_{i,k}), && k \krange{0}{K-2}, \ i \irange{1}{I}.
\end{align*}
For $i \leq I$, here the decision variables are the  traversal times $T_i$ and the control points $\br_{i,k}$ for $k \leq K$, $\dot \br_{i,k}$ for $k \leq K-1$, and $\ddot  \br_{i,k}$ for $k \leq K-2$.

The initialization phase requires solving problem~\eqref{eq:parameterization_cvx} repeatedly.
Using the B\'ezier parameterization, the constraints of this problem become
\begin{align*}
& \br_{i,0} = S_i \bp_{i-1}, \\
& \br_{i,K} = S_i \bp_i, \\
& \dot \br_{i,0} = \dot \br_{i,K-1} = \bzero, \\
& \dot \br_{i,k} \in \cV,&& k \krange{0}{K-1}, \\
& \ddot \br_{i,k} \in T_i \cA,&& k \krange{0}{K-2}, \\
& T_i \geq 1 / S_i, \ S_i > 0, \\
& \dot \br_{i,k} =K (\br_{i,k+1} - \br_{i,k}), && k \krange{0}{K-1}, \\
& \ddot \br_{i,k} = (K-1) (\dot \br_{i,k+1} - \dot \br_{i,k}), && k \krange{0}{K-2},
\end{align*}
with variables $T_i$, $S_i$, $\br_{i,k}$ for $k \leq K$, $\dot \br_{i,k}$ for $k \leq K-1$, and $\ddot  \br_{i,k}$ for $k \leq K-2$.

\section{Parametric Problem for Runtime Analysis}
\label{app:staircase}

We detail the construction of the problem in \S\ref{sec:comparison}, which is parametric in the number $I$ of safe sets, the number $m$ of facets of each safe set, and the space dimension $n$.

We let $\bx_0 = \bzero \in \reals^n$.
Then, for all $i \irange{1}{I}$, we define the points
$$
\bx_i = \bx_{i-1} + \be_j,
$$
where $\be_j$ is the $j$th element of the standard basis, and $j$ is the reminder of $i/n$.
In other words, the point $\bx_i$ is obtained by shifting $\bx_{i-1}$ along the $j$th dimension by one.
The sequence of points $\bx_0, \ldots, \bx_I$ is then an $n$-dimensional staircase with $I$ links.

For $i \irange{1}{I}$, we let $\cE_i \subset \reals^n$ be an ellipsoid centered around the line segment that connects $\bx_{i-1}$ and $\bx_i$.
The main axis of $\cE_i$ is aligned with the vector $\bx_i - \bx_{i-1}$ and has length $4/3$, while all the other axes have length $1/3$.
Each safe set $\cQ_i$ is a conservative polytopic approximation with $m$ facets of the corresponding ellipsoid $\cE_i$.
In $n=2$ dimensions, this construction over-approximates the unit circle with a regular polytope, which is then mapped to the polytope $\cQ_i$ through the same affine transformation that maps the unit circle to the ellipse $\cE_i$.
We let the initial point be $\bq\init = \bx_0$ and the terminal point be $\bq\term = \bx_I$.
The constraint sets $\cV$ and $\cA$ are spheres centered at the origin of radius $10$ and $1$, respectively.

\section{Trust-Region Method for Minimum-Time Problems}
\label{app:fpp}

We briefly describe how the trust-region method from~\cite[\S{V}]{marcucci2024fast} can be adapted for minimum-time problems.

We start from problem~\eqref{eq:pos_form}, whose only nonconvex constraints are the velocity continuity~\eqref{eq:pos_form_cont_vel} and the acceleration constraint~\eqref{eq:pos_form_acc}.
As in the problem with fixed transition points~\eqref{eq:fixed_pos}, we introduce the variables
\begin{align}
\label{eq:bilinear_constraints}
& \dot \br_i = \dot \bq_i / T_i, && i \irange{1}{I}.
\end{align}
Recall that the derivatives $\ddot \br_i$ of these functions correspond to $\ddot \bq_i/T_i$.
Using these additional variables, the velocity continuity~\eqref{eq:pos_form_cont_vel} becomes a linear constraint
\begin{align*}
& \dot \br_i(1) = \dot \br_{i+1}(0), && i \irange{1}{I-1},
\end{align*}
while the acceleration constraint~\eqref{eq:pos_form_acc} becomes a convex constraint of the kind discussed in \S\ref{sec:notation}:
\begin{align*}
& \ddot \br_i(s) \in T_i \cA, && s \in [0,1], \ i \irange{1}{I}.
\end{align*}
This yields a program whose nonconvexity is due exclusively to the equality constraint~\eqref{eq:bilinear_constraints}.

Following~\cite[\S~V-C]{marcucci2024fast}, we solve the nonconvex program by alternating between two convex programs: a ``tangent'' and ``projection'' program.
In the tangent program, we linearize constraint~\eqref{eq:bilinear_constraints} around the current solution, and try to improve the trajectory shape and timing jointly.
The linearization error is controlled by a trust-region constraint whose size and shape are defined as in~\cite[\S~V-C]{marcucci2024fast}.
Because of the linearization error, the solution of the tangent program might not be feasible for the original nonconvex program.
Therefore, in the projection step, we fix the new trajectory timing and solve the resulting convex program, hoping to obtain a feasible solution with cost lower than the current one.
The numerical solution of these infinite-dimensional convex subproblems follows the steps in \S\ref{sec:num}.

\end{document}